\newtheorem{theorem}{\textbf{Theorem}}
\begin{document}
\title{Energy-Efficient Power Control for Multiple-Task Split Inference in
UAVs: A Tiny Learning-Based Approach}
\author{Chenxi~Zhao, Min Sheng, \IEEEmembership{Senior Member, IEEE,} Junyu
Liu, \IEEEmembership{Member,~IEEE,} Tianshu Chu, \\
and Jiandong Li, \IEEEmembership{Fellow,~IEEE}}
\maketitle
\begin{abstract}
The limited energy and computing resources of unmanned aerial vehicles
(UAVs) hinder the application of aerial artificial intelligence. The
utilization of split inference in UAVs garners significant attention
due to its effectiveness in mitigating computing and energy requirements.
However, achieving energy-efficient split inference in UAVs remains
complex considering of various crucial parameters such as energy level
and delay constraints, especially involving multiple tasks. In this
paper, we present a two-timescale approach for energy minimization
in split inference, where discrete and continuous variables are segregated
into two timescales to reduce the size of action space and computational
complexity. This segregation enables the utilization of tiny reinforcement
learning (TRL) for selecting discrete transmission modes for sequential
tasks. Moreover, optimization programming (OP) is embedded between
TRL's output and reward function to optimize the continuous transmit
power. Specifically, we replace the optimization of transmit power
with that of transmission time to decrease the computational complexity
of OP since we reveal that energy consumption monotonically decreases
with increasing transmission time. The replacement significantly reduces
the feasible region and enables a fast solution according to the closed-form
expression for optimal transmit power. Simulation results show that
the proposed algorithm can achieve a higher probability of successful
task completion with lower energy consumption. 
\end{abstract}

\begin{IEEEkeywords}
Power control, tiny learning, energy-efficient, multiple-task split
inference.
\end{IEEEkeywords}

\section{Introduction}

In recent years, there has been a remarkable surge in the proliferation
of unmanned aerial vehicles (UAVs), accompanied by the deployment
of diverse delay-sensitive applications on UAVs, e.g., surveillance
and exploration \cite{UAV-sur,UAV-ex}, where UAVs can send the data
through the satellite network or cellular network \cite{arxiv-back1,arxiv-back2}.
For expediting access to and processing of the data from UAVs, the
widespread implementation of artificial intelligence (AI) technologies
in UAVs is imperative. The concept of aerial AI refers to the implementation
of deep neural networks (DNN) on UAVs, thereby facilitating localized
provision of AI services that empower emerging applications at Internet
of UAVs. However, the deployment of aerial AI on UAVs is hindered
by constraints in energy and computing capabilities.

The split inference is a state-of-the-art AI architecture, wherein
the DNN is divided into device and server sub-models \cite{3-3,UAVsplit1,UAVsplit2}.
Utilizing the device sub-model, a UAV extracts features from raw data
samples and transmits them to a server. Subsequently, the server employs
these features to compute an inference result. The utilization of
split inference can significantly alleviate the computation load on
UAVs, thereby reducing energy consumption. Nevertheless, there are
some challenges associated with the application of split inference.
Specifically, ensuring the delay constraint becomes challenging under
direct data transmission, while transmitting data features after processing
significantly increases energy consumption. The delay constraint and
energy consumption of split inference poses a complex trade-off. Worsestill,
the coupling among multiple sequential tasks and the random time-varying
wireless channel further complicate the aforementioned issue. Therefore,
this work focuses on 1) how to choose whether to apply split inference
based on the current energy level of UAV and task delay constraint,
and 2) optimizing the resource allocation policy for multiple tasks
considering the random time-varying wireless channel. 

\subsection{Related Work and Motivation}

Extensive research has been conducted on the edge AI problem, and
comprehensive surveys on this topic can be found in \cite{1-1,1-2,1-3,1-4,arxiv2}.
In \cite{1-1}, authors propose an on-demand DNN collaborative inference
framework, wherein edge devices collaborate to achieve low latency
edge AI. Similarly, a multi-device edge computing framework is proposed
in \cite{1-2}, where the task execution of multi-device edge computing
is defined as a multi class classification problem. Moreover, a hardware-based
prototype and a software framework for optimizing a pre-trained YOLOv2
model is introduced in \cite{1-3} to achieve lower inference latency
for a single input by collating partial outputs generated by partitioned
CNN parameters at the gateway device. In \cite{1-4}, the memory and
communication requirements of edge devices are optimized considering
the limited computing resources of most edge computing devices. While
these efforts explore the potential of edge AI to significantly enhance
performance in various aspects for edge devices, they fail to acknowledge
the crucial constraints posed by limited energy and computing capabilities
of such devices, which are critical considerations in practical deployments
of edge AI, especially for UAVs.

Considering the fact that UAVs typically lack an external power supply,
energy harvesting is commonly employed in UAVs to facilitate long-term
sustained operation, e.g., radio-frequency energy harvesting and solar
energy harvesting \cite{UAV_MEC_1,UAV_MEC_2,UAV_MEC_3,UAV_MEC_4,arxiv1}.
In \cite{UAV_MEC_1}, a UAV-aided multiaccess edge computing system
with energy harvesting is studied, where the full-duplex protocol
is considered to realize simultaneously receiving confidential data
from the UAV and broadcasting the control instructions. A novel model
that uses a cluster of UAVs with energy harvesting capability as a
computational core is constructed in \cite{UAV_MEC_2}. It is capable
of providing long-term computational services for various scenarios.
In \cite{UAV_MEC_3}, a UAV-enabled mobile-edge computing (MEC) wireless-powered
system is studied. The computation rate maximization problems in a
UAV-enabled MEC wireless powered system are investigated under both
partial and binary computation offloading modes, subject to the energy-harvesting
causal constraint and the UAV's speed constraint. In \cite{UAV_MEC_4},
authors considered an online dynamic offloading and resource scheduling
algorithm to address the stochastic optimization problem of minimizing
energy and computing resource consumption of energy harvesting devices
while meeting the quality of service requirements of IoT devices.
However, while energy harvesting can extend the device's uptime, it
does not mitigate the energy consumption of edge AI. Insufficient
harvested energy still impacts edge AI operations, and the instability
in its collection significantly affects performance.

The split inference is an advanced edge AI architecture designed to
optimize energy consumption in edge devices \cite{3-1,3-2,3-3,3-4}.
Specifically, the overall model is partitioned into two components,
with one part being processed by the edge device and the other by
the server. It is important to note that the split inference approach
differs from the part offload model in edge computing, as the latter
partitions the data into two segments while still executing the entire
AI model on the edge device. In contrast to the part offload model,
split inference offers a significant reduction in computation and
communication overhead as well as ensuring data privacy. In \cite{3-5},
authors study the joint optimization of the model placement and online
model splitting decisions to minimize the energy-and-time cost of
device-edge co-inference in presence of wireless channel fading. A
multi-agent collaborative inference scenario, including a single edge
server and multiple user equipment (UEs), is studied in \cite{3-6}
to achieve fast and energy-efficient inference for all UEs. In \cite{3-7},
authors consider the scenario including multiple application tasks
with different options of deep learning models and different hyperparameter
settings. Although significant achievements have been made by combining
split inference with edge task offloading, the potential impact of
coupling among multiple sequential tasks is often overlooked. Moreover,
the adverse effects of random time-varying wireless channels can further
exacerbate the coupling above.

As previously discussed, in order to fully harness the advantages
of split inference and reduce the energy consumption, it is imperative
to address two key considerations before applying split inference
in UAVs. On the one hand, the energy level of UAVs and task delay
requirements must be taken into account when employing split inference.
On the other hand, the selection of transmission mode and transmit
power should be jointly optimized considering the randomness of task
arrival and time-varying wireless channel parameters. These two challenges
serve as the driving force behind this research.

\subsection{Contribution}

The energy-efficient power control for multi-task split inference
in UAVs is studied in this paper, taking into consideration multiple
sequential tasks. Upon the arrival of a task, it is initially determined
whether to employ split inference based on energy level of UAVs and
task delay constraints. Subsequently, the transmit power in each time
slot is optimized to minimize total power consumption while ensuring
the delay constraints of all tasks. Furthermore, we develop an optimal
algorithm to solve it. The main contributions of this paper can be
summarized as follows:
\begin{itemize}
\item An optimization programming (OP)-embedded tiny reinforcement learning
(TRL) framework is proposed. To be specific, we first present the
energy minimization as a two-timescale problem, where discrete and
continuous variables are segregated into two timescales. TRL-based
approach is utilized to determine whether split inference should be
employed for processing tasks based on the energy level of UAVs, length
of the transmission queue and task arrival rate. Furthermore, an OP
is embedded between the output and reward function of TRL to optimize
the transmit power in each time slot. Compared to conventional deep
reinforcement learning (DRL)-based algorithms, the proposed framework
can significantly decrease the size of action space and training time.
\item An optimal algorithm is developed to optimize the transmit power in
each time slot. In particular, the sample average approximation method
is applied to address the chance constraint caused by the random wireless
channel parameters. Then, the alternating direction method of multipliers
(ADMM) method is explored to transform the initial problem into multiple
subproblem to decrease the complexity. For each subproblem, the minimal
transmit power is achieved by optimizing the transmission time of
each task, which is motivated by our findings that increasing transmission
time leads to a monotonic decrease in energy consumption. The substitution
significantly reduce the feasible region and enables fast solution
according to the closed-form expression for optimal transmit power.
Simulation results show that the proposed OPETRL can achieve a higher
probability of successful data transmission with lower overall energy
consumption.
\end{itemize}
The rest of this paper is organized as follows. Section \ref{sec:System-model}
introduces the system model. The problem formulation and transformation
are given in Section \ref{sec:Problem-formulation}. The OP-embedded
TRL architecture and the optimal algorithm for transmit power are
given in Section \ref{sec:Tiny-learning-architecture}. Section \ref{sec:Conclusion }
shows the simulation results and Section \ref{sec:Conclusion } concludes
this paper.

\begin{figure}
\begin{centering}
\includegraphics[scale=0.65]{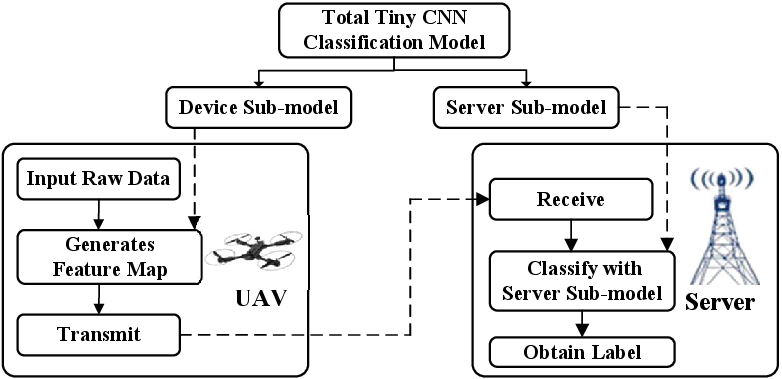}
\par\end{centering}
\caption{\label{fig:System}The split inference system in UAVs.}
\end{figure}

\section{System model\label{sec:System-model}}

\subsection{Network Model\label{subsec:Network-model}}

Consider the edge inference system shown in Fig. \ref{fig:System},
which consists of a UAV and a server. The UAV is equipped with some
sensors and collects the raw data, such as images for surveillance.
Then, the local data at UAV is compressed into features and sent to
the server to do a remote inference on a trained model. The considered
the UAV is powered by the harvested solar energy. The amount of harvested
solar energy is affected by clouds \cite{solar}, where the harvested
solar energy is reduced if there is a cloud between the sun and the
solar panel. The attenuation of the solar light passing through a
cloud can be modeled as \cite{solar}
\begin{equation}
\phi\left(d_{\text{cloud}}\right)=e^{-\beta d_{\text{cloud}}},
\end{equation}
where $\beta\geq0$ denotes the absorption coefficient that models
the optical characteristics of cloud and $d_{\text{cloud}}$ denotes
the cloud thickness. Therefore, the output power of the solar panel
used by the UAV can be modeled by the following function
\begin{equation}
P_{\text{s}}=\eta SGe^{-\beta d_{\text{cloud}}},\label{eq:solar power}
\end{equation}
where $\eta$ and $S$ are constants that denote the energy harvesting
efficiency and the equivalent area of the solar panels, respectively.
Moreover, $G$ denotes the average solar radiation intensity on earth.
Note that (\ref{eq:solar power}) is a simplified model that ignores
the change of $G$ over time during the day. The reasonableness of
this assumption lies in the fact that the duration of data transmission
is significantly shorter than the change period of $G$. Therefore,
in this paper, we consider the output power of the solar panel used
by the UAV is a constant.

\subsection{Transmission Model\label{subsec:Transmission-Model}}

A tiny CNN classification model is applied in this paper, which comprises
multiple convolutional (CONV) layers followed by multiple fully-connected
layers. We consider two patterns to complete classification tasks.
For the first pattern, the UAV transmit directly the raw data to the
server, called direct transmission (DT) pattern. The size of raw data
is set as $S$ bits. In this case, the total tiny CNN classification
model is deployed in the server. For the second pattern, the UAV generates
the feature map of raw data and sends the feature map to the server,
called computation-transmission (CT) pattern. In this case, the classification
model should be divided into two parts to implement split inference,
i.e., device sub-model and server sub-model. Following the widely
used designs \cite{3-3,3-1}, the splitting point is set after the
CONV layers, where the output of the last CONV layer is a feature
map with height $L_{\text{h}}$ and width $L_{\text{w}}$. Each element
in the feature map is quantized with a sufficiently high resolution
of $Q$ bits such that quantization errors are negligible. Note that
the completion time for classification tasks generally has a requirement,
although it may be less stringent in certain applications, such as
surveillance. We denote $C$ as the task completion time threshold,
which means that the duration from acquiring the raw data to obtaining
the classification result should be within $C$. The processing time
on the server is disregarded in this paper, as it is reasonable to
assume that the server possesses ample computing power and is not
constrained by any power limitations. Consequently, compared to the
processing time on the UAV and transmission delay, the processing
time on the server can be considered negligible. 

The UAV is allocated a narrow-band channel, whose bandwidth is denoted
by $W$. The time of the considered system is divided into multiple
time slots, whose set is denoted by $\mathcal{T}=\left\{ 1,...,T\right\} $.
In particular, the duration of each time slots is denoted by $\tau$,
which is smaller than the coherence time of the communication channel.
Thus, the power gain of wireless channel is constant within each time
slot. Denote $g_{t}$ as the small-scale fading channel coefficient
between the UAV and server at time slot $t$. In particular, $g_{t}$
is an independent and identically distributed (i.i.d.) random variable
over time slots, where the distribution function of $g\left(t\right)$
is set as $\mathcal{P}_{g}$. Moreover, we consider the zero-mean
Gaussian noise with variance $\sigma^{2}$. Therefore, the achievable
rate of the channel between the UAV and server at time slot $t$ can
be expressed as
\begin{equation}
R_{t}=W\log_{2}\left(1+\frac{p_{t}g_{t}}{\sigma^{2}d^{2}}\right),
\end{equation}
where $p_{t}$ and $d$ are the transmit power at time slot $t$ and
the distance from the UAV to server, respectively.

\subsection{Energy Model}

To capture the key features of the energy consumption during computation
and communication in the edge inference system, the energy consumption
model is first studied in this section. Specifically, we focus on
energy consumption during data transmission and processing in the
UAV, which are elaborated as follows.
\begin{itemize}
\item \textbf{Computing Energy:} For CT pattern, the UAV need to generate
the feature map of raw data. Denote $c$ as the number of floating-point
operations required to generate the feature map, which is depended
on the architecture of CNN deployed in the UAV. Moreover, denote $n_{\text{t}}$
as the number of floating-point operations required by all CONV layers.
The computing power of UAV at time slot $t$ is modeled as $p_{t}^{\text{C}}=kf^{3}$
as in \cite{frequency}, where $f$ and $k$ are the UAV's computational
speed and a coefficient depending on chip architecture, respectively.
Denote $f_{\max}$ as the maximum computational speed of UAV. The
number of floating-point operations that the UAV can handle in a cycle
is modeled as $n_{\text{d}}=N_{\text{c}}n_{\text{v}}/n_{\text{s}}$,
where $N_{\text{c}}$ and $n_{\text{s}}$ are the number of CPU cores
and the bit size of operating system, respectively . In addition,
$n_{v}$ is the bits of vector operation in each cycle. Therefore,
the execution time of generating the feature map is $t_{f}^{\text{C}}=\frac{n_{\text{t}}n_{\text{s}}}{fN_{\text{c}}n_{\text{v}}}$.
The energy consumption of generating the feature map is given by 
\begin{equation}
E^{\text{C}}=\frac{kf^{2}n_{\text{t}}n_{\text{s}}}{N_{\text{c}}n_{\text{v}}}.
\end{equation}
\item \textbf{Transmission Energy: }Denote $t_{i}^{\text{S}}$ and $t_{i}^{\text{T}}$
as the start time of transmission and the time consumption of transmission
for the $i$-th raw data or its feature map, respectively. Thus, the
transmission energy consumption is $E^{\text{T}}\left(i\right)=\sum_{t=t_{i}^{\text{S}}}^{t_{i}^{\text{S}}+t_{i}^{\text{T}}}p_{t}^{\text{T}}.$
\end{itemize}
Besides the energy consumption, we also model the battery energy level
of UAV. Denote $E_{t}^{\text{B}}$ as the battery energy level of
UAV at the start of time slot $t$. Thus, $E_{0}^{\text{B}}$ is the
initial energy level of UAV. During each time slot $t$, the UAV receives
a finite amount of energy, i.e., $\tau P_{\text{s}}$, from the environment.
Therefore, the energy in the battery at the start of time slot $t+1$
epoch is given by 
\begin{equation}
E_{t+1}^{\text{B}}=E_{t}^{\text{B}}+\tau P_{\text{s}}-E_{t}^{\text{T}}-E_{t}^{\text{C}},\forall t
\end{equation}
where $E_{t}^{\text{T}}$ and $E_{t}^{\text{C}}$ are the transmission
energy consumption and the computing energy consumption at time slot
$t$, respectively. 

\subsection{Traffic Model}

We assume that the sensor will generate the raw data at each time
slot with probability $q$. Denote $a_{t}=1$ as the raw data is generated
at time slot $t$, while the time slot that the $i$-th raw data arrives
is denoted by $t_{i}^{\text{A}}$. As mentioned in Section \ref{subsec:Transmission-Model},
the UAV will choose one pattern to process the raw data after it is
generated. For DT pattern, the UAV directly puts the raw data into
the transmission queue and process the next raw data. However, for
CT pattern, the UAV first gets the feature map and put it into the
transmission queue and then, processes the next raw data. Let the
computational variable $b_{t}\in\left\{ 0,1\right\} $ indicate that
whether the UAV process the raw data generated at time slot $t$ with
DT pattern. If yes, $b_{t}=1$; else $b_{t}=0$. Furthermore, the
UAV will choose a computational speed to process the raw data if $b_{t}=1$.
To guarantee the task completion time threshold $C$, the transmit
power $p_{t}$ should satisfy the following constraint 
\begin{equation}
\tau\stackrel[t=0]{t_{i}^{\text{A}}+C}{\sum}R_{t}\geq\stackrel[t=0]{t_{i}^{\text{A}}}{\sum}a_{t}\left[b_{t}L_{\text{h}}L_{\text{w}}Q+\left(1-b_{t}\right)S\right],\forall i.\label{eq:time threshold}
\end{equation}
Meanwhile, the transmit power $p_{t}$ should satisfy the following
causal constraint
\begin{equation}
\tau\stackrel[t=0]{t_{i}^{\text{S}}+t_{i}^{\text{T}}}{\sum}R_{t}\leq\stackrel[t=0]{t_{i}^{\text{A}}}{\sum}a_{t}\left[b_{t}L_{\text{h}}L_{\text{w}}Q+\left(1-b_{t}\right)S\right],\forall i.\label{eq:causal constraint}
\end{equation}

\section{Problem formulation \label{sec:Problem-formulation}}

In this section, we formulate the optimization problem to minimize
the energy consumption of UAV and then, transform it into a multistep
decision problem. 

\subsection{Problem formulation \label{subsec:Problem-formulation}}

We first formulate a problem to minimize the energy consumption of
UAV as follows
\begin{align}
\left(\text{P0}\right)\underset{\mathbf{p},\boldsymbol{\text{b}},\mathbf{f}}{\min} & \underset{t}{\sum}\left(E_{t}^{\text{T}}+E_{t}^{\text{C}}\right)\nonumber \\
s.t.\  & 0\leq E_{t}^{\text{B}}\leq E_{\text{\ensuremath{\max}}}^{\text{B}},\forall t,\label{eq:energy constraint}\\
 & 0\leq f_{i}\leq f_{\text{\ensuremath{\max}}},\forall i,\label{eq:frequency constraint}\\
 & b_{t}\in\left\{ 0,1\right\} ,\forall t,\label{eq:0-1 constraint}\\
 & b_{t}\leq a_{t},\forall t,\label{eq:task choose constraint}\\
 & 0\leq p_{t}\leq p_{\max},\forall t,\label{eq:power constraint}\\
 & \left(\ref{eq:causal constraint}\right),\left(\ref{eq:time threshold}\right).\nonumber 
\end{align}
where $\mathbf{p}=\left\{ p_{1},...,p_{t}\right\} $ and $\boldsymbol{\text{b}}=\left\{ b_{1},...,b_{t}\right\} $
are the sets of transmit power and computational variables. In addition,
$\mathbf{f}=\left\{ f_{1},...,f_{i}\right\} $ is the set of UAV computational
speed for different tasks. In problem (P0), constraint (\ref{eq:energy constraint})
means that the energy in the battery should be positive and smaller
than the maximum battery capacity, denoted by $E_{\text{\ensuremath{\max}}}^{\text{B}}$.
Constraint (\ref{eq:time threshold}) indicates that each task $i$
should be completed within the time threshold $C$. Moreover, constraint
(\ref{eq:causal constraint}) is a causal constraint where the transmission
process of the $i$-th raw data cannot be earlier than its generation
time. Constraints (\ref{eq:time threshold}) and (\ref{eq:causal constraint})
make problem (P0) quite different from other single-task optimization
problems, where the summation upper limits in the two constraints
are dependent on the previous resource allocation, thereby making
the two constraints a form of uncertainty. In particular, denote $t_{i}^{\text{QC}}$
and $t_{i}^{\text{QT}}$ as the queuing delay of the $i$-th raw data
in the transmission queue and computation queue, respectively. Thus,
the queuing delay of the transmission queue can be written as
\begin{equation}
t_{i}^{\text{QC}}=\max\left\{ 0,t_{i-1}^{\text{A}}+t_{i-1}^{\text{QC}}+a_{t}\left(1-b_{t}\right)t_{f_{i-1}}^{\text{C}}-t_{i}^{\text{A}}\right\} ,
\end{equation}
where $t_{0}^{\text{QC}}=0$. Moreover, the queuing delay of the transmission
queue is
\begin{equation}
t_{i}^{\text{QT}}=\max\left\{ 0,t_{i}^{\text{S}}-\left(t_{i}^{\text{A}}+t_{i}^{\text{QC}}+t_{f_{i}}^{\text{C}}\right)\right\} ,
\end{equation}
where $t_{i}^{\text{S}}=t_{i-1}^{\text{A}}+t_{i-1}^{\text{QC}}+t_{i-1}^{\text{QT}}+t_{i-1}^{\text{T}}$.

\subsection{Problem transformation \label{subsec:Problem-transformation}}

As mentioned above, problem (P0) differs from one-task optimization
problems \cite{frequency}, where the preceding tasks have impact
on the pattern choice of subsequent tasks and resource allocation
strategies in the subsequent time slots. Meanwhile, problem (P0) is
also different from the multi-task optimization problems, which do
not consider the task completion time threshold and queuing delay.
To solve problem (P0), we transform it into a series of multistep
decision problems, where the problem in the $t^{\prime}$-th time
slot is shown as 
\begin{align}
\left(\text{P1}\right)\underset{\mathbf{p}_{t^{\prime}},b_{t^{\prime}},f}{\min} & \underset{t\in\underline{\mathcal{T}}_{t^{\prime}}}{\sum}\left(E_{t}^{\text{T}}+E_{t}^{\text{C}}\right)\nonumber \\
s.t.\  & b_{t^{\prime}}\in\left\{ 0,1\right\} ,\\
 & b_{t^{\prime}}\leq a_{t^{\prime}},\\
 & 0\leq E_{t}^{\text{B}}\leq E_{\text{\ensuremath{\max}}}^{\text{B}},\forall t\in\underline{\mathcal{T}}_{t^{\prime}},\label{eq:P1-C4}\\
 & 0\leq p_{t}\leq p_{\max},\forall t\in\underline{\mathcal{T}}_{t^{\prime}},\label{eq:P1-C5}\\
 & \tau\left(\stackrel[t=t_{i}^{\text{S}}]{t^{\prime}-1}{\sum}\tilde{R}_{t}+\stackrel[t=t^{\prime}]{t_{i}^{\text{S}}+t_{i}^{\text{T}}}{\sum}R_{t}\right)\nonumber \\
 & \leq\stackrel[t=t^{\prime}]{t_{i}^{\text{A}}}{\sum}b_{t}L_{\text{h}}L_{\text{w}}Q+\left(1-b_{t}\right)S,\forall i\in\mathcal{I}_{t^{\prime}}.\label{eq:P1-C6}\\
 & \tau\left(\stackrel[t=t_{i}^{\text{S}}]{t^{\prime}-1}{\sum}\tilde{R}_{t}+\stackrel[t=t^{\prime}]{t_{i}^{\text{A}}+C}{\sum}R_{t}\right)\nonumber \\
 & \geq\stackrel[t=t^{\prime}]{t_{i}^{\text{A}}}{\sum}b_{t}L_{\text{h}}L_{\text{w}}Q+\left(1-b_{t}\right)S,\forall i\in\mathcal{I}_{t^{\prime}}\label{eq:P1-C7}
\end{align}
In problem (P1), it is defined that  $\mathcal{I}_{t^{\prime}}=\left\{ i|t_{i}^{\text{A}}\leq t^{\prime}\right\} $,
$\underline{\mathcal{T}}_{t^{\prime}}=\left\{ t^{\prime},...,t_{i}^{\text{A}}+C|i=\underset{i}{\arg\max}\left\{ t_{i}^{\text{A}}\leq t^{\prime}\right\} \right\} $
and $\mathbf{p}_{t^{\prime}}=\left\{ p_{t}^{\text{T}}|t\in\underline{\mathcal{T}}_{t^{\prime}}\right\} $.
Moreover, we set $f=0$ if $b_{t^{\prime}}=0$. Note that $\tilde{R}_{t}$
are known in the $t^{\prime}$-th time slot, which are dependent on
the process strategy and power allocation before the $t^{\prime}$-th
time slot.

While the initial problem (P0) can be transformed into a series of
multistep decision problems, problem (P1) is still intractable. In
the next section, an algorithm is designed for the multistep decision
problems.

\section{Tiny learning architecture-based optimal algorithm\label{sec:Tiny-learning-architecture}}

In practice, problem (P1) should be solved within a short time, while
the  UAV generally dose not have enough computation capacity to finish
it within the predefined time. DRL is a promising method to achieve
fast strategy generation. However, problem (P1) is a mixed integer
programming (MIP). While MIP can be solved by DRL, a complex neural
network (NN) with massive parameters is necessary. It is impractical
to deploy a complex NN in the UAV due to the limited energy. Hence,
in this section, we propose a tiny reinforcement learning (TRL) architecture,
which can significantly reduce the number of parameters of NN via
embedding the optimization programming into TRL. Furthermore, the
complex of optimization programming is also greatly decreased by analyzing
the non-linear objective function.

\subsection{TRL-based architecture}

As discussed above, problem (P1) is a MIP problem. It is unsuitable
for  UAVs to apply DRL to solve the MIP problem due to the massive
parameters of NN. Recalling and carefully analyzing problem (P1),
we find that it is not necessary to include the continuous variables
into the action space of the NN in reinforcement learning. Specifically,
the discrete variables in problem (P1) determine the strategies for
transmission modes in the sequential time slots, while the continuous
variables determine the allocation of transmission power in each timeslot.
As is widely acknowledged, reinforcement learning excels at addressing
continuous decision problems due to its inherent MDP nature. Therefore,
if the optimal solution for transmission power allocation in each
time slot can be achieved under the given strategies for completing
task patterns, it would enable the separation of continuous and discrete
variables in problem (P1). In this case, employing only reinforcement
learning to determine the discrete variables for transmission mode
selections within the sequential time slots would significantly simplify
the NN structure utilized in reinforcement learning. Additionally,
optimizing the continuous variables of transmission power allocation
can circumvent MIP, thereby substantially reducing computational time.
The proposed TRL architecture, called OP embedded TRL (OPETRL), is
given in Fig. \ref{fig:4.1 TRL-based architecture}. 
\begin{figure}
\begin{centering}
\includegraphics[scale=0.5]{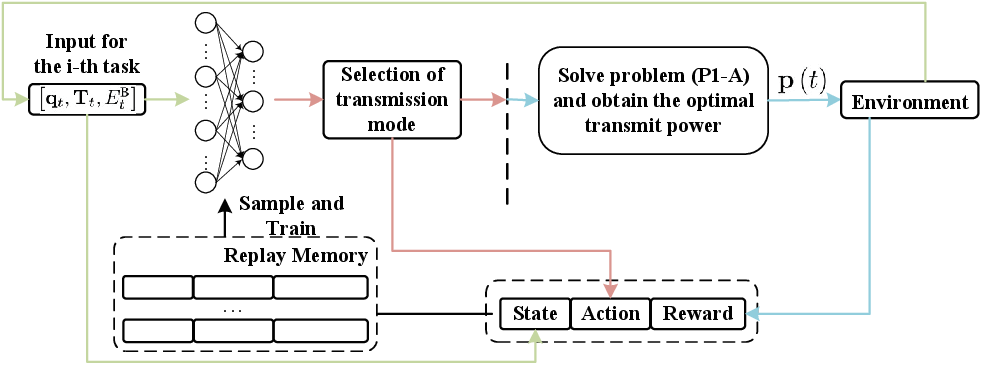}
\par\end{centering}
\caption{\label{fig:4.1 TRL-based architecture}The TRL-based architecture.}
\end{figure}

The problem (P1-A) in Fig. \ref{fig:4.1 TRL-based architecture} is
given as follows

\begin{align*}
\left(\text{P1-A}\right)\underset{\mathbf{p}_{t^{\prime}}}{\min} & \underset{t\in\underline{\mathcal{T}}_{t^{\prime}}}{\sum}\left(E_{t}^{\text{T}}+E_{t}^{\text{C}}\right)\\
s.t.\  & \left(\ref{eq:P1-C4}\right)-\left(\ref{eq:P1-C7}\right).
\end{align*}
In problem (P1-A), the computational variable $b_{t}$ is fixed, which
distinguishes it different from problem (P1). The key ideas of OPETRL
include two aspects. On the one hand, the solution processes of continuous
and discrete variables are segregated, with OP and TRL employed for
their respective resolutions. This segregation is critical as the
presence of mixed-integer variables can significantly reduce the convergence
rate of DRL and increase computational time required by OP. On the
other hand, the reward is not obtained by directly implementing action
in the environment. The action of TRL is considered as the input of
OP, whose output determines the reward of TRL in conjunction with
the action. In the following, we will provide a detailed description
of both TRL and OP.

\subsection{TRL for transmission mode selection}

As previously mentioned, TRL is employed to determine strategy for
selecting transmission modes. In this paper, we utilize DDQN, providing
a detailed exposition of the state space, action space, reward function,
and network architecture in subsequent sections.

\subsubsection{State space}

The state in time slot $t$ is denoted by $\mathbf{s}_{t}=\left[\mathbf{q}_{t},\mathbf{T}_{t},E_{t}^{\text{B}}\right]$,
where $\mathbf{q}_{t}$ represents the vector of the data size of
each task awaiting transmission and $\mathbf{T}_{t}$ denotes the
vector of remaining time allowed to complete each task. In each time
slot, the NN in DRL selects a transmission mode based on the transmission
queue, time threshold and the energy level upon task arrives.

\subsubsection{Action space}

Observing the state, the NN in DRL outputs the decision of transmission
mode at each time slot. The action for the $i$-th task is defined
as $a\left(t\right)$, where the size of action space equals $2$.
The action will be regarded as the input of problem (P1-A), whose
output further determines the reward of agent.

\subsubsection{Reward function}

The reward for the $i$-th task, denoted by $r\left(i\right)$, is
contingent upon the violation of constraints during a task. Specifically,
the action violating the completion time constraint will be penalized
with a negative reward, while adherence to it will result in a positive
reward. Note that the action choosing transmission modes varies in
a long-timescale manner, while the transmit power varies in a short-timescale
manner. Hence, the reward will not be generated at each time slot.
The reward for the $i$-th task is set as the energy consumption in
duration $\left[t_{i}^{\text{A}},t_{i}^{\text{S}}+t_{i}^{\text{T}}\right]$.

\subsubsection{Network Architecture}

In this paper, DDQN is used to determine the transmission mode for
each task \cite{DDQN}. There are one fully-connected hidden layers
with 32 neurons in the Q network. The ReLU is used as an activation
function in hidden layer. The parameter of the Q network is initialized
by a Xavier initialization scheme \cite{kumar2017weight}. Besides,
the target Q network is created by copying the corresponding parameters
of the Q network in the initialization phase.

The replay memory is updated with a new training sample $\left(\mathbf{s}\left(i\right),a\left(i\right),r\left(i\right),\mathbf{s}\left(i+1\right)\right)$
when the $i$-th task is completed within the time threshold or reaches
the time threshold. While updating the network parameters, a random
minibatch of training samples are sampled uniformly from replay memory.
The replay memory size is 1000 and the minibatch size is 64. The Q
network is trained by using the gradient descent method to minimize
the loss function, which is defined as the mean square error of the
difference between the Q value given by Q network and the target value
given by the target Q network. Moreover, the parameters of the Q network
will be copied into the target Q network each 20 time steps.

\subsection{Optimal algorithm for transmit power}

After determining the transmission mode, problem (P1-A) should be
addressed in order to obtain a resource allocation strategy that satisfies
the time threshold of each task. However, the inherent randomness
of wireless channels, which results in random achievable transmit
rate, poses a challenge in meeting time threshold constraint (\ref{eq:P1-C6}).
To solve it, we reformulate time threshold constraint (\ref{eq:P1-C6})
as a chance constraint, which is given by 
\begin{align}
 & \text{Pr}\left\{ \tau\left(\stackrel[t=t_{i}^{\text{S}}]{t^{\prime}-1}{\sum}\tilde{R}_{t}+\stackrel[t=t^{\prime}]{t_{i}^{\text{A}}+C}{\sum}R_{t}\right)\right.\nonumber \\
 & \left.\geq\stackrel[t=t^{\prime}]{t_{i}^{\text{A}}}{\sum}a_{t}\left[b_{t}L_{\text{h}}L_{\text{w}}Q+\left(1-b_{t}\right)S\right]\right\} \geq1-\epsilon,\forall i\in\mathcal{I}_{t^{\prime}}\label{eq:chance constraint}
\end{align}
In (\ref{eq:chance constraint}), $0<\epsilon\ll1$ is the maximum
probability that the time threshold constraint cannot be satisfied.
Moreover, the objective function in problem (P1-A) is also reformulated
as
\begin{equation}
\mathbb{E}_{\mathbf{g}\sim\mathcal{P}_{g}}\left[\underset{t\in\underline{\mathcal{T}}_{t^{\prime}}}{\sum}E_{t}^{\text{T}}\right].
\end{equation}
Hence, problem (P1-A) can be rewritten as

\begin{align*}
\left(\text{P1-B}\right)\underset{\mathbf{p}_{t^{\prime}}}{\min} & \mathbb{E}_{\mathbf{g}\sim\mathcal{P}_{g}}\left[\underset{t\in\underline{\mathcal{T}}_{t^{\prime}}}{\sum}E_{t}^{\text{T}}\right]\\
s.t.\  & \left(\ref{eq:P1-C4}\right)-\left(\ref{eq:P1-C6}\right),\left(\ref{eq:chance constraint}\right).
\end{align*}

\subsubsection{Problem transformation}

Problems (P1-B) is difficult to solve exactly since it is difficult
to obtain the closed-form expressions of (\ref{eq:chance constraint}).
To solve it, we approximate the computationally intractable chance
constraints (\ref{eq:chance constraint}) via the sample average approximation
(SAA) technique \cite{so2013distributionally,yang2021multicast}.
The SAA technique offers two advantages in approximating chance constraints.
On the one hand, it is applicable to general channel distribution
models, which is crucial as the practical distribution of channel
coefficients often defies typical modeling approaches. On the other
hand, by replacing the chance constraint with a finite set of simpler
constraints under the SAA technique, we can ensure that the performance
gap between solutions to the sample approximation problem and those
to the original problem remains minimal. Denote $\mathbf{g}^{k}$
as the $k$-th independent sample of $\mathbf{g}$ generated according
to $\mathcal{P}_{g}$, where $k\in\mathcal{K}$ with $\mathcal{K}\overset{\triangle}{=}\left\{ 1,...,K\right\} $.
According to the law of large numbers, if the sample size $K$ is
sufficiently enough, we can approximately substitute the objective
functions in problems (P1-B) with
\begin{align}
\mathbb{E}_{\mathbf{g}\sim\mathcal{P}_{g}}\left[\underset{t\in\underline{\mathcal{T}}_{t^{\prime}}}{\sum}\left(E_{t}^{\text{T}}+E_{t}^{\text{C}}\right)\right]\approx & \frac{1}{K}\underset{k\in\mathcal{K}}{\sum}\underset{t\in\underline{\mathcal{T}}_{t^{\prime}}}{\sum}\left(E_{t}^{\text{T},k}+E_{t}^{\text{C},k}\right).\label{eq:obj sampling}
\end{align}
Moreover, the chance constraint (\ref{eq:chance constraint}) can
be replaced with 
\begin{align}
 & \tau\left(\stackrel[t=t_{i}^{\text{S}}]{t^{\prime}-1}{\sum}\tilde{R}_{t}+\stackrel[t=t^{\prime}]{t_{i}^{\text{A}}+C}{\sum}R_{t}^{k}\right)\nonumber \\
 & \geq\stackrel[t=t^{\prime}]{t_{i}^{\text{A}}}{\sum}a_{t}\left[b_{t}L_{\text{h}}L_{\text{w}}Q+\left(1-b_{t}\right)S\right],\forall i\in\mathcal{I}_{t^{\prime}},k\in\mathcal{K}.\label{eq:chance constraint sampling}
\end{align}
It has been proved that if the number of samples $K$ is no less than
$K^{*}$ with 
\begin{align}
K^{*}= & \left\lceil \frac{1}{\epsilon}\left(N-1+\log\frac{1}{\theta}\sqrt{2\left(N-1\right)\log\frac{1}{\theta}+\log^{2}\frac{1}{\theta}}\right)\right\rceil \label{eq:number of samples}
\end{align}
for any $\theta\in\left(0,1\right)$, then any solution satisfying
constraint (\ref{eq:chance constraint sampling}) may satisfy chance
constraint (\ref{eq:chance constraint}) with a probability at least
$1-\theta$ \cite{so2013distributionally,yang2021multicast}. 

The number of variables and constraints in the transformed problem
is significantly larger (approximately K times that of problem (P0)),
particularly when dealing with a large number of samples $K$. Moreover,
it should be noted that the transformed problem represents a non-convex
optimization problem with constraint (\ref{eq:P1-C6}). However, due
to limited computational capacity and energy resources, the UAV is
unable to achieve a solution within a time slot. To handle it, we
reframe it as a global consensus problem \cite{boyd2011distributed},
which is prevalent in the domains of distributed computing and multi-agent
systems, necessitating all agents to converge on a shared value crucial
for computation. In the transformed problem, we focus on optimizing
the transmit power and computational speed under random channel coefficients,
where the optimization of the computational speed executes once after
task arrival. Therefore, we can consider each sample as an individual
agent, with the transmit power at the current time slot being treated
as a shared variable among all agents. Inspired by the approaches
in \cite{tang2019service}, a parallel architecture is proposed for
problem (P1-B). To be specific, we replace problem (P1-B) as $K$
subproblems, where the $k$-th subproblem is 
\begin{align}
\left(\text{P1-C}\right)\underset{\mathbf{p}_{t^{\prime}}^{k}}{\min} & \underset{t\in\underline{\mathcal{T}}_{t^{\prime}}}{\sum}E_{t}^{\text{T},k}\nonumber \\
s.t.\  & \left(\ref{eq:P1-C4}\right)-\left(\ref{eq:P1-C6}\right),\nonumber \\
 & \tau\left(\stackrel[t=t_{i}^{\text{S}}]{t^{\prime}-1}{\sum}\tilde{R}_{t}+\stackrel[t=t^{\prime}]{t_{i}^{\text{A}}+C}{\sum}R_{t}^{k}\right)\nonumber \\
 & \geq\stackrel[t=t^{\prime}]{t_{i}^{\text{A}}}{\sum}a_{t}\left[b_{t}L_{\text{h}}L_{\text{w}}Q+\left(1-b_{t}\right)S\right],\forall i\in\mathcal{I}_{t^{\prime}}.\label{eq:a sample}
\end{align}
Note that the variables in constraints (\ref{eq:P1-C4})-(\ref{eq:P1-C6})
within problem (P1-C) are the corresponding ones under the sample
$\mathbf{g}^{k}$. To obtain the transmit power, we can parallelly
address the $K$ subproblems and then recover $\mathbf{p}_{t^{\prime}}$
and $f$ from the solutions via ADMM. In the following, we present
the optimal algorithm for each subproblem.

\subsubsection{Optimal algorithm for subproblem}

We aim to propose an algorithm that can efficiently achieve the solution
for problem (P1-C) with low computational complexity. However, constraint
(\ref{eq:P1-C6}) is a non-convex constraint, which results in extra
computational complexity. Therefore, we present a problem formulation
for the transmission of the $i$-th task as follows
\begin{align}
\left(\text{P2}\right)\underset{\mathbf{p}_{t_{i}^{\text{S}}}^{k}}{\min} & \sum_{t=t_{i}^{\text{S}}}^{t_{i}^{\text{S}}+t_{i}^{\text{T,\ensuremath{\max}}}}p_{t}^{k}\nonumber \\
s.t.\  & \left(\ref{eq:P1-C5}\right),\nonumber \\
 & \tau\sum_{t=t_{i}^{\text{S}}}^{t_{i}^{\text{S}}+t_{i}^{\text{T}}}R_{t}^{k}\geq D_{i},\label{eq:single task rate}
\end{align}
where $D_{i}=b_{t_{i}^{\text{A}}}L_{\text{h}}L_{\text{w}}Q+\left(1-b_{t_{i}^{\text{A}}}\right)S$,
$\mathbf{p}_{t_{i}^{\text{S}}}^{k}=\left(p_{t_{i}^{\text{S}}}^{k},...,p_{t_{i}^{\text{S}}+t_{i}^{\text{T,\ensuremath{\max}}}}^{k}\right)$
and $t_{i}^{\text{T,\ensuremath{\max}}}$ is the allowed transmission
time of the $i$-th task. Note that in problem (P2), the time consumption
of transmission for the $i$-th task is given. By the Karush-Kuhn-Tucker
(KKT) conditions, the sufficient and necessary conditions for the
optimal solution of problem (P2), denoted by $\mathbf{p}_{t_{i}^{\text{S}}}^{k}$,
can be stated in the following theorem. \begin{theorem}\label{thm:optimal solution}The
optimal solution $\mathbf{p}_{t_{i}^{\text{S}}}^{k,*}$ of problem
(P2) satisfies the following sufficient and necessary conditions 
\begin{align}
p_{t}^{k,*} & =\min\left\{ \left[\tau We^{\frac{\frac{D_{i}}{\tau W}-\sum_{t=t_{i}^{\text{S}}}^{t_{i}^{\text{S}}+t_{i}^{\text{T,\ensuremath{\max}}}}\log\left(\tau Wh_{t}^{k}\right)}{t_{i}^{\text{T,\ensuremath{\max}}}}}-\frac{1}{h_{t}^{k}}\right]^{+},p_{\max}\right\} \label{eq:optimal power}
\end{align}
for all $t\in\mathcal{T}_{t_{i}^{\text{S}}}\overset{\triangle}{=}\left(t_{i}^{\text{S}},...,t_{i}^{\text{S}}+t_{i}^{\text{T,\ensuremath{\max}}}\right)$,
where $h_{t}^{k}=\frac{g_{t}^{k}}{\sigma^{2}d^{2}}$ and $\left[x\right]^{+}=\max\left\{ 0,x\right\} $.\end{theorem}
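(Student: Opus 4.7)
The plan is to obtain (\ref{eq:optimal power}) by applying the KKT conditions to (P2). The first step is to observe that (P2) is convex: the objective $\sum_t p_t^k$ and the box constraints (\ref{eq:P1-C5}) are linear, while (\ref{eq:single task rate}) defines a superlevel set of the concave map $\mathbf{p}_{t_i^{\text{S}}}^k \mapsto \tau\sum_t W\log(1+p_t^k h_t^k)$ and is therefore convex. Assuming the instance is strictly feasible (some power assignment strictly overshoots $D_i$ while respecting $p_t^k \le p_{\max}$), Slater's condition holds, so the KKT conditions are both necessary and sufficient for optimality.

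Next I would attach a multiplier $\lambda \ge 0$ to the rate constraint and multipliers $\mu_t,\nu_t \ge 0$ to the upper and lower box constraints in (\ref{eq:P1-C5}), and write stationarity
\begin{equation*}
1 - \frac{\lambda\,\tau W\,h_t^k}{1+p_t^k h_t^k} + \mu_t - \nu_t = 0, \qquad t\in\mathcal{T}_{t_i^{\text{S}}},
\end{equation*}
up to a constant depending on the logarithm base used in the rate formula. For slots in which both box multipliers vanish ($\mu_t=\nu_t=0$) this collapses to the classical water-filling relation $p_t^k = \nu^\star - 1/h_t^k$ with water level $\nu^\star$ proportional to $\lambda\tau W$. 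Complementary slackness on the two box multipliers then clips this expression at $0$ when $\nu^\star \le 1/h_t^k$ and at $p_{\max}$ when the water-filling value exceeds $p_{\max}$, producing the outer form $\min\{[\nu^\star - 1/h_t^k]^{+},p_{\max}\}$ that appears in (\ref{eq:optimal power}).

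The remaining step is to pin down $\nu^\star$. Since uniformly shrinking every unsaturated $p_t^k$ would strictly decrease the objective without violating the box constraints, the rate constraint must be tight at the optimum; otherwise one could perturb the solution and contradict optimality. Substituting the water-filling form into $\tau\sum_{t\in\mathcal{T}_{t_i^{\text{S}}}} R_t^k = D_i$ makes the factor $1+p_t^k h_t^k$ collapse to $\nu^\star h_t^k$, and solving the resulting linear equation in $\log\nu^\star$ while absorbing the $\tau W$ factor into the argument of the logarithm yields exactly the exponential expression inside the min of (\ref{eq:optimal power}).

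The main obstacle is the bookkeeping around saturated slots. If some indices satisfy $\nu^\star - 1/h_t^k > p_{\max}$ or $\nu^\star - 1/h_t^k \le 0$, the derivation above (which implicitly treats every slot as interior) must be replayed on the active set: the saturated slots are removed, their contribution is subtracted from $D_i$, and $\nu^\star$ is recomputed on the remainder. I would either state the result under the generic assumption that no slot saturates, or argue that an active-set iteration terminates in finitely many steps at the unique KKT point. Verifying that the multipliers $\lambda,\mu_t,\nu_t$ obtained in this way are nonnegative and consistent with the clipping closes the sufficiency direction.
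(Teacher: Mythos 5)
Your proposal follows essentially the same route as the paper's own proof: a Lagrangian/KKT analysis of (P2), a case split on the multipliers that yields the clipped water-filling form $\min\{[\nu^{\star}-1/h_{t}^{k}]^{+},p_{\max}\}$, an argument that the rate constraint is tight at the optimum, and a closed-form water level obtained by substituting back into the tight constraint. The saturated-slot bookkeeping you flag is in fact also glossed over in the paper, whose appendix derives the multiplier $\eta$ assuming every slot is interior and then simply clips the resulting expression, so your treatment is, if anything, the more careful one on that point.
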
\begin{proof}Please
refer to Appendix \ref{Proof:optimalsolution}.\end{proof}Moreover,
we proof that the total energy under the optimal solution decreases
monotonically with the transmission time in the following theorem.\begin{theorem}\label{thm:decrease with time}The
objective value of problem (P2) associated with the optimal solution
decreases monotonically with the transmission time.\end{theorem}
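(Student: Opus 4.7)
The plan is to establish the claim by a feasibility-embedding argument that leverages the monotonicity of the achievable rate in the transmit power, and then to sharpen it to strict decrease via the water-filling structure obtained in Theorem~\ref{thm:optimal solution}. First, I would fix two values of the transmission time with $T_1 < T_2$ and denote by $\mathbf{p}^{k,*}(T_1)$ an optimal solution to (P2) under $t_i^{\text{T},\max} = T_1$, whose objective value I write as $E^*(T_1)$.

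Next, I would construct a candidate $\tilde{\mathbf{p}}^k$ for (P2) with $t_i^{\text{T},\max} = T_2$ by zero-padding: set $\tilde{p}_t^k = p_t^{k,*}(T_1)$ for $t \in \{t_i^{\text{S}},\ldots,t_i^{\text{S}}+T_1\}$ and $\tilde{p}_t^k = 0$ for $t \in \{t_i^{\text{S}}+T_1+1,\ldots,t_i^{\text{S}}+T_2\}$. Since $R_t^k = W\log_2(1 + 0 \cdot h_t^k) = 0$ in each new slot, the rate constraint (\ref{eq:single task rate}) is preserved because the sum over the original $T_1+1$ slots already meets $D_i$, and the box constraint (\ref{eq:P1-C5}) is trivially satisfied. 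Thus $\tilde{\mathbf{p}}^k$ is feasible for the $T_2$-instance, and its objective value equals $E^*(T_1)$, so by optimality $E^*(T_2) \leq E^*(T_1)$, giving non-increasing monotonicity.

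For strict decrease I would invoke the closed-form structure of Theorem~\ref{thm:optimal solution}. When the extra slot carries a positive channel gain (which holds almost surely under $\mathcal{P}_g$), one can perturb $\tilde{\mathbf{p}}^k$ by reallocating an infinitesimal amount of power from any strictly active slot into the new slot; strict concavity of $\log(1 + p h_t^k)$ in $p$ then produces a net surplus in the rate sum, which can be absorbed by a small uniform decrement of the active powers, strictly reducing $\sum_t p_t^k$. Equivalently, one can note that the water level $\tau W \exp(\mu)$ in (\ref{eq:optimal power}) is a decreasing function of $T$ whenever all active channel gains remain positive, so the per-slot optimal powers shrink as $T$ grows.

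The main obstacle will be the boundary cases hidden inside the $[\,\cdot\,]^{+}$ and the $p_{\max}$ clip in (\ref{eq:optimal power}): if the water-filling truncates some slots to zero or saturates others at $p_{\max}$, the reallocation direction must be restricted to strictly interior active slots, requiring a brief case split on whether at least one such slot exists. However, only the weak monotonicity asserted by the theorem is needed for the subsequent algorithm design, and that follows directly from the feasibility-embedding step above, making the boundary technicalities avoidable.
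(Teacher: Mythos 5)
Your argument is correct, and it takes a genuinely different route from the paper. You prove weak monotonicity by a nesting-of-feasible-sets (zero-padding) argument: any optimal solution for window $T_1$ extends by zeros to a feasible point of the $T_2>T_1$ instance with the same objective, since the added slots satisfy $0\le p_t\le p_{\max}$ trivially and contribute nonnegative rate to constraint (\ref{eq:single task rate}), hence $E^*(T_2)\le E^*(T_1)$. The paper instead argues at the level of a fixed window: it takes the optimal solution, zeroes the power in one interior slot $t_1$ and compensates with $\Delta p$ at another interior slot $t_2$ (i.e., a shorter effective transmission), and uses the KKT water-level relation $\eta=(1+h_t^k p_t^{k,*})/(\tau W h_t^k)$ from Theorem~\ref{thm:optimal solution} to show $\Delta p\ge p_{t_1}^{k,*}$, so the concentrated (shorter) transmission cannot use less energy. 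Your approach is more elementary and in fact tighter in what it concludes: it compares the optimal values of the two problems directly, needs no interiority assumption ($0<p_{t_j}^{k,*}<p_{\max}$) and no appeal to Theorem~\ref{thm:optimal solution}, whereas the paper's proof only compares the optimum against one particular perturbed feasible point and implicitly assumes interior active slots exist; what the paper's route buys is structural insight into the water-filling trade-off that motivates Algorithm~\ref{algorithm:TranmitPower}. Note that the paper itself only establishes the non-strict inequality, so your strict-decrease sketch (which is the only loosely argued part, given the $[\cdot]^{+}$ and $p_{\max}$ truncations) is not needed; your closing observation that weak monotonicity suffices is consistent with how the theorem is used.
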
\begin{proof}Please
refer to Appendix \ref{Proof:DecreasewithTime}.\end{proof}

\textit{Remark: According to Theorem \ref{thm:optimal solution} and
\ref{thm:decrease with time}, the optimal strategy for transmit power
can be obtained by optimizing the allowed transmission time of each
task. Moreover, the transmission process will last until the completion
time threshold of the last task in the current queue is reached.}

Hence, we formulate the optimization problem at the $t^{\prime}$-th
time slot as 
\begin{align}
\left(\text{P3}\right)\underset{\mathbf{t}_{i}^{\text{T,\ensuremath{\max}}}}{\min} & \underset{i\in\mathcal{I}_{t^{\prime}}}{\sum}E_{i}\left(t_{i}^{\text{T,\ensuremath{\max}}}\right)\nonumber \\
s.t.\  & t_{i}^{\text{S}}+t_{i}^{\text{T,\ensuremath{\max}}}\leq t_{i}^{\text{A}}+C,i\in\mathcal{I}_{t^{\prime}}\label{eq:P3-C1}\\
 & t_{l}^{\text{S}}+t_{l}^{\text{T,\ensuremath{\max}}}=t_{l}^{\text{A}}+C\label{eq:P3-C2}\\
 & t_{i}^{\text{S}}=t_{i-1}^{\text{S}}+t_{i-1}^{\text{T,\ensuremath{\max}}},i\in\mathcal{I}_{t^{\prime}},\label{eq:P3-C3}
\end{align}
where $E_{i}\left(t_{i}^{\text{T,\ensuremath{\max}}}\right)$ is the
total transmission energy for the $i$-th task. Moreover, in (\ref{eq:P3-C2}),
$l$ is the index of the last task in the set $\mathcal{I}_{t^{\prime}}$.
The optimal solution of problem (P3) should satisfy the condition
stated in Theorem \ref{thm:optimal time}. \begin{theorem}\label{thm:optimal time}If
the optimal allowed transmission time of tasks $m,n\in\mathcal{I}_{t^{\prime}}$
satisfies $t_{i}^{\text{S}}+t_{i}^{\text{T,\ensuremath{\max}}}\leq t_{i}^{\text{A}}+C,\forall i\in\left\{ m,n\right\} $,
then, it should also satisfy 
\begin{align}
 & \frac{t_{n}^{\text{T,\ensuremath{\max}}}}{t_{m}^{\text{T,\ensuremath{\max}}}}\nonumber \\
= & \frac{e^{\frac{D_{m}}{\tau Wt_{m}^{\text{T,\ensuremath{\max}}}}}e^{\frac{1}{t_{n}^{\text{T,\ensuremath{\max}}}}\sum_{t=t_{n}^{\text{S}}}^{t_{n}^{\text{S}}+t_{n}^{\text{T,\ensuremath{\max}}}}\log\left(\tau Wh_{t}^{k}\right)}}{e^{\frac{D_{n}}{\tau Wt_{n}^{\text{T,\ensuremath{\max}}}}}e^{\frac{1}{t_{m}^{\text{T,\ensuremath{\max}}}}\sum_{t=t_{m}^{\text{S}}}^{t_{m}^{\text{S}}+t_{m}^{\text{T,\ensuremath{\max}}}}\log\left(\tau Wh_{t}^{k}\right)}},\forall n,m\in\mathcal{I}_{t^{\prime}}.\label{eq:ratio of time}
\end{align}
\end{theorem}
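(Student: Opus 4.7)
The plan is to derive (\ref{eq:ratio of time}) from first-order optimality conditions applied to problem (P3) after a structural reduction. First, I would observe that Theorem \ref{thm:decrease with time} shows $E_i(t_i^{\text{T,max}})$ is strictly decreasing in $t_i^{\text{T,max}}$. Combined with (\ref{eq:P3-C2}) and the chaining (\ref{eq:P3-C3}), this forces the total allocated duration to equal the constant $T_{\text{tot}}\triangleq t_l^{\text{A}}+C-t_1^{\text{S}}$; otherwise some $T_i$ could be enlarged to strictly lower the objective. So (P3) is equivalent to minimizing $\sum_i E_i(T_i)$ over the equality $\sum_i T_i=T_{\text{tot}}$ together with the cumulative inequalities implied by (\ref{eq:P3-C1}).

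Next, I would substitute the closed-form optimal power from Theorem \ref{thm:optimal solution} into the objective. A direct computation yields $E_i=\tau T_i P_i^{*}-\tau\sum_{t\in[t_i^{\text{S}},t_i^{\text{S}}+T_i]}1/h_t^{k}$, where $P_i^{*}\triangleq\tau W\exp\{A_i/T_i\}$ with $A_i=D_i/(\tau W)-\sum_t\ln(\tau W h_t^{k})$ is the task-$i$ water level. When summed across tasks, the inverse-channel contributions aggregate to a sum over the fixed window $[t_1^{\text{S}},t_l^{\text{A}}+C]$ and are therefore a constant that drops out. Hence the effective problem is to minimize $\sum_i T_i P_i^{*}$ subject to the same constraints.

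I would then form the Lagrangian with multiplier $\mu$ for the equality $\sum_i T_i=T_{\text{tot}}$ and multipliers $\{\lambda_i\}$ for the cumulative inequalities coming from (\ref{eq:P3-C1}). Under the hypothesis that (\ref{eq:P3-C1}) is slack for both $m$ and $n$, complementary slackness kills the contributions of $\lambda_m$ and $\lambda_n$, so stationarity reduces to $\partial(T_m P_m^{*})/\partial T_m=\mu=\partial(T_n P_n^{*})/\partial T_n$. Differentiating $T_i P_i^{*}=\tau W T_i\exp\{A_i/T_i\}$ and simplifying with the explicit form of $A_i$, the log-channel endpoint contributions cancel and the balance collapses to $T_m P_m^{*}=T_n P_n^{*}$, equivalently $T_n/T_m=P_m^{*}/P_n^{*}$. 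Substituting the exponential form of $P_i^{*}$ on the right-hand side and rearranging yields precisely (\ref{eq:ratio of time}).

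The main obstacle is the careful handling of the derivative $\partial(T_i P_i^{*})/\partial T_i$: because the sum $\sum_t\ln(\tau W h_t^{k})$ entering $A_i$ ranges over $T_i$ slots, varying $T_i$ produces endpoint contributions tied to the boundary slot shared with the neighbouring task. These boundary log-channel values are exactly those that appear on the other side of the shared boundary, and they cancel once the two KKT conditions are combined via the matched perturbation $T_m\to T_m+\epsilon$, $T_n\to T_n-\epsilon$ with intermediate $T_k$ fixed and subsequent start times shifted consistently. Making this cancellation rigorous is the technical crux; once it is carried out, the reduction to (\ref{eq:ratio of time}) is purely algebraic.
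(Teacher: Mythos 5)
Your structural reduction is the same opening move as the paper's: substitute the closed-form power of Theorem \ref{thm:optimal solution}, note that $\sum_{i}\sum_{t}1/h_{t}^{k}$ is constant over the fixed window because of (\ref{eq:P3-C2})--(\ref{eq:P3-C3}), and reduce (P3) to minimizing $\sum_{i}t_{i}^{\text{T,\ensuremath{\max}}}\,\tau W e^{A_{i}/t_{i}^{\text{T,\ensuremath{\max}}}}$ with $A_{i}=D_{i}/(\tau W)-\sum_{t}\log(\tau Wh_{t}^{k})$; this is exactly (\ref{eq:E1 in proof3}). The divergence, and the gap, is in how you finish. The paper does not use stationarity at all: it lower-bounds the sum by $N_{I}$ times the geometric mean (AM--GM, inequality (\ref{eq:E2 in proof3})) and reads off the \emph{equality condition}, namely that the per-task quantities $t_{i}^{\text{T,\ensuremath{\max}}}e^{A_{i}/t_{i}^{\text{T,\ensuremath{\max}}}}$ are equal for all tasks, which after taking ratios is precisely (\ref{eq:ratio of time}). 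Your KKT route, carried out honestly, produces a different condition. Writing $x_{i}=A_{i}/T_{i}$ with $T_{i}=t_{i}^{\text{T,\ensuremath{\max}}}$ and treating $A_{i}$ as fixed, one gets $\frac{d}{dT_{i}}\bigl(\tau WT_{i}e^{A_{i}/T_{i}}\bigr)=\tau We^{x_{i}}\left(1-x_{i}\right)$, so equating marginal energies of tasks $m$ and $n$ gives $e^{x_{m}}(1-x_{m})=e^{x_{n}}(1-x_{n})$ --- a relation between the water levels $x_{m},x_{n}$ alone, with no dependence on the durations. The theorem's condition (\ref{eq:ratio of time}) is instead equivalent to $T_{m}e^{x_{m}}=T_{n}e^{x_{n}}$, i.e.\ equality of per-task \emph{values}, which explicitly couples $T_{i}$ and $x_{i}$. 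Equal derivatives do not collapse to equal values here (they would coincide only in degenerate cases such as $T_{m}=T_{n}$), so the step ``the balance collapses to $T_{m}P_{m}^{*}=T_{n}P_{n}^{*}$'' is false as stated, and it is the crux of your argument.

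The endpoint bookkeeping you flag as the technical crux does not rescue this. Including the dependence of $A_{i}$ on $T_{i}$ only adds boundary terms of the form $-\log\bigl(\tau Wh_{t_{b}}^{k}\bigr)$ inside the parentheses of each derivative; moreover, since $m$ and $n$ need not be adjacent, your matched perturbation shifts every intermediate window through the chaining (\ref{eq:P3-C3}) and injects further channel-dependent terms that do not telescope in general. None of this converts a first-order (marginal) balance into the zeroth-order (value) balance that (\ref{eq:ratio of time}) asserts. If you want to keep an optimality-condition flavor you would have to explain why equal marginals should imply equal values for this objective, which they do not; the route that actually yields the stated identity is the paper's AM--GM equality argument leading to (\ref{eq:E3 in proof3}) and then (\ref{eq:ratio of time}).
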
\begin{proof}Please refer to Appendix \ref{Proof:OptimalTime}.\end{proof}Moreover,
considering the constraints (\ref{eq:P3-C2}) and (\ref{eq:P3-C3}),
we proposed an optimal algorithm to solve problem (P1-C) as shown
in Algorithm \ref{algorithm:TranmitPower}.\begin{algorithm}[!t]
\caption{Optimal algorithm for tranmit power}  
\begin{algorithmic}[1] \label{algorithm:TranmitPower}

\STATE \textbf{Initialization}
\STATE $\bullet$ Input task set $\mathcal{I}_{t^{\prime}}$ and the transmission data of each task $D_{i},\forall i\in\mathcal{I}_{t^{\prime}}$. 
\STATE $\bullet$ Set the allowed transmission time of the first task as $t_{1}^{\text{T,\ensuremath{\max},(0)}}=t_{1}^{\text{A}}+C-\max\left\{ t_{1}^{\text{S}},t^{\prime}\right\} $. 
\STATE $\bullet$ Set the allowed transmission time of other tasks as $t_{i}^{\text{T,\ensuremath{\max},(0)}}=t_{i}^{\text{A}}+C-t_{i}^{\text{S}},\forall i\in\mathcal{I}_{t^{\prime}}\backslash\left\{ 1\right\} $.
\STATE $\bullet$ Set $l=1$.
\STATE
\STATE \textbf{while} $l<\mathit{Loop}$
\STATE Calculate the values on both sides of  (\ref{eq:ratio of time}), denoted by $f_{n,m}^{L}$ and $f_{n,m}^{R},\forall n,m\in\mathcal{I}_{t^{\prime}}$, respectively.
\STATE Set $\left\{ n_{\max},m_{\max}\right\} =\underset{n,m}{\arg\max}\left(f_{n,m}^{R}-f_{n,m}^{L}\right)$.
\STATE \textbf{if} $f_{n_{\max},m_{\max}}^{R}-f_{n_{\max},m_{\max}}^{L}>0$
\STATE Set $t_{m_{\max}}^{\text{T,\ensuremath{\max}},(l)}=  t_{m_{\max}}^{\text{T,\ensuremath{\max}},(l-1)}-1$ \textbf{if} $t_{m_{\max}+1}^{\text{S},(l-1)}>t_{m_{\max}+1}^{\text{A}}+t_{f}^{\text{C}}$.
\STATE Set $t_{n_{\max}}^{\text{T,\ensuremath{\max}},(l)}=t_{n_{\max}}^{\text{T,\ensuremath{\max}},(l-1)}+1$  \textbf{if} $t_{n_{\max}}^{\text{T,\ensuremath{\max}},(l-1)}<t_{n_{\max}}^{\text{A}}+C-t_{n_{\max}}^{\text{S},(l-1)}$.
\STATE \textbf{elseif} $f_{n_{\max},m_{\max}}^{R}-f_{n_{\max},m_{\max}}^{L}<0$
\STATE Set $t_{n_{\max}}^{\text{T,\ensuremath{\max}},(l)}=  t_{n_{\max}}^{\text{T,\ensuremath{\max}},(l-1)}-1$ \textbf{if} $t_{n_{\max}+1}^{\text{S},(l-1)}>t_{n_{\max}+1}^{\text{A}}+t_{f}^{\text{C}}$.
\STATE Set $t_{m_{\max}}^{\text{T,\ensuremath{\max}},(l)}=t_{m_{\max}}^{\text{T,\ensuremath{\max}},(l-1)}+1$  \textbf{if} $t_{m_{\max}}^{\text{T,\ensuremath{\max}},(l-1)}<t_{m_{\max}}^{\text{A}}+C-t_{m_{\max}}^{\text{S},(l-1)}$.
\STATE \textbf{else}  Break
\STATE \textbf{endif} 
\STATE Set $l=l+1$.
\STATE \textbf{end} 
\STATE Obtain the transmit power according to equation (\ref{eq:optimal power}).
\STATE
\RETURN Transmit power of each subproblem.
\end{algorithmic} 
\end{algorithm}

\subsubsection{Restoration of transmit power}

With Algorithm \ref{algorithm:TranmitPower}, we can obtain the copies
of transmit power at each time slot in all $K$ subproblems, i.e.,
$\mathbf{p}_{t^{\prime}}^{k},\forall k\in\mathcal{K}\overset{\triangle}{=}\left\{ 1,...,K\right\} $.
In the following, we restore transmit power at each time slot from
their copies, i.e., restoring $\mathbf{p}_{t^{\prime}}$ from $\mathbf{p}_{t^{\prime}}^{k},\forall k\in\mathcal{K}$.

The ADMM can be leveraged for variable restoration, with a key finding
being that it drives local copies of global variables towards their
average value \cite{boyd2011distributed,tang2019service,ADMM2}. Particular,
since the transmit power at each time slot is a continuous variable,
it can be restored by $\mathbf{p}_{t^{\prime}}\approx\frac{1}{K}\sum_{k\in\mathcal{K}}\mathbf{p}_{t^{\prime}}^{k}$.
Therefore, the total algorithm for problem (P1-B) is shown in Algorithm
\ref{algorithm:OptAlgorithm}.\begin{algorithm}[!t]
\caption{Optimal algorithm for (P1-B)}  
\begin{algorithmic}[1] \label{algorithm:OptAlgorithm}

\STATE \textbf{Initialization}
\STATE $\bullet$ Input the number of samples $K$.
\STATE
\STATE Generate $K$ subproblems based on $K$ samples of channel coefficients.
\STATE Obtain the solution of each subproblem by applying Algorithm \ref{algorithm:OptAlgorithm}.
\STATE Restore transmit power at each time slot from their copies.
 
\RETURN Solution of problem (P1-B).
\end{algorithmic} 
\end{algorithm}

\section{Simulation Results\label{sec:Simulation-Results}}

In this section, we provide numerical examples to demonstrate the
effectiveness of the proposed algorithm. We consider a UAV and a server,
where the distance between them is set as $d=100$m. The duration
of each time slot is set as $\tau=0.1\text{s}$. The receiver noise
power and the channel power at the reference distance are set as $\sigma^{2}$=
\textminus 110dBm and $\rho=\text{\textminus}60\text{dB}$, respectively.
Moreover, the bandwidth is set as $W=2$MHz.

We compare the proposed OPETRL with the following two offloading policies. 
\begin{itemize}
\item One-task policy: The policy optimizes the computation frequency and
transmit power for each task independently, without taking into account
the coupling among multiple sequential tasks. 
\item Greedy: The UAV chooses transmission mode and transmit power according
to the current wireless channel parameters. 
\end{itemize}
\begin{figure}
\begin{centering}
\subfloat[\label{fig:P vs QoS}]{\begin{centering}
\includegraphics[scale=0.45]{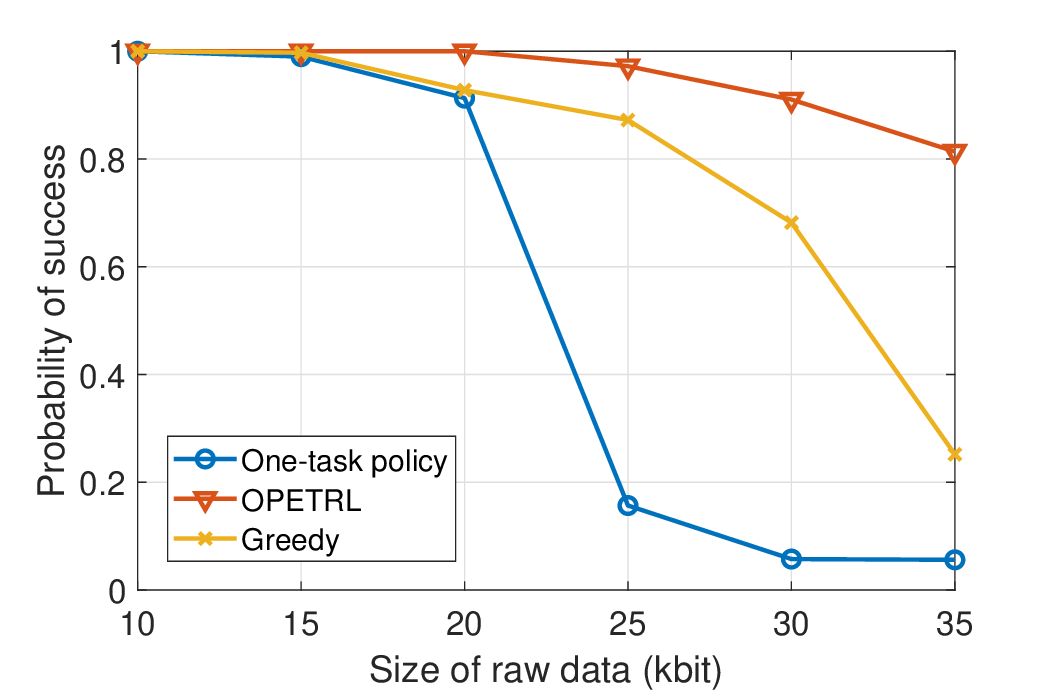}
\par\end{centering}
}
\par\end{centering}
\begin{centering}
\subfloat[\label{fig:E vs QoS}]{\begin{centering}
\includegraphics[scale=0.45]{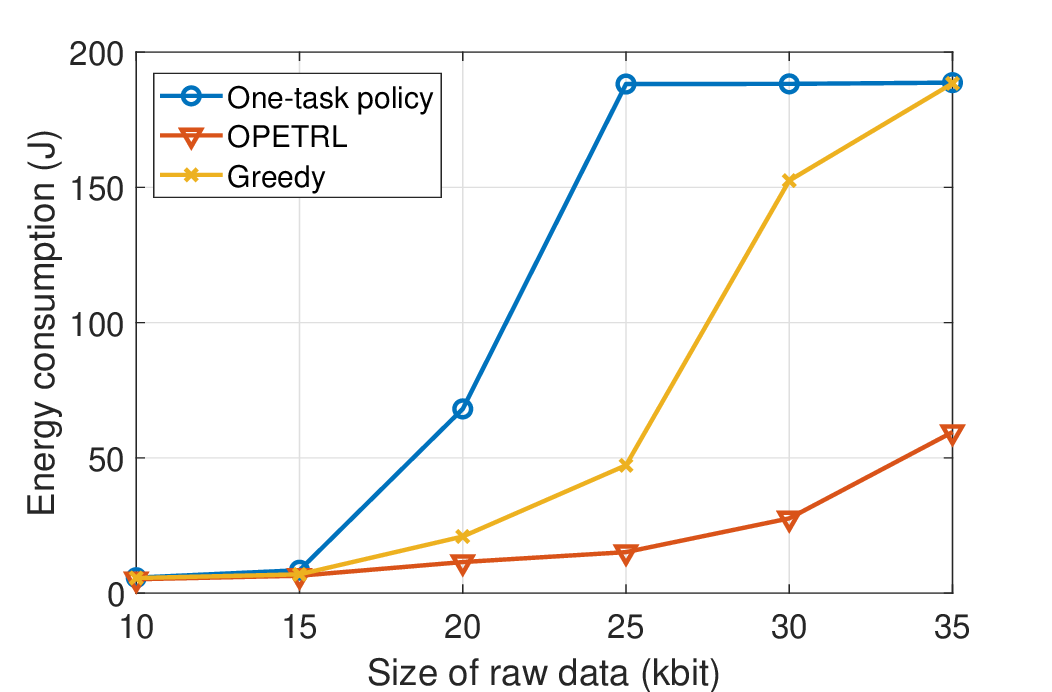}
\par\end{centering}
}
\par\end{centering}
\caption{\label{fig: QoS}The effects of the size of raw data $S$. (a) The
probability of successful data transmission versus $S$. (b) The overall
energy consumption versus $S.$}
\end{figure}
To reveal the effects of the size of raw data, we plot the probability
of successful data transmission and the overall energy consumption
versus the size of raw data in Fig. \ref{fig: QoS}. From the numerical
results, it can be obtained that the OPETRL generally outperforms
the baselines. Moreover, both the proposed OPETRL and Greedy policy
exhibit a slower decline in performance compared to that of the one-task
policy.

For Fig. \ref{fig:P vs QoS}, the increased size of the raw data results
in an extended transmission time, thereby leading to a cascade of
transmission failures for the later tasks when multiple tasks arrive
intensively under the one-task policy. Thus, the probability of successful
data transmission will fast decrease when the size of raw data is
larger than 20 kbit. Moreover, the most of data transmission fails
when $S\geq25$ kbit under the one-task policy.

Similarly, as shown in Fig. \ref{fig:E vs QoS}, the increased size
of the raw data leads to a corresponding increase in overall energy
consumption. This is because that the cascade of transmission failures
results in the UAV transmitting data with maximum power most of the
time. Meanwhile, the UAV will choose the CT pattern for most task,
thereby consuming computational energy.
\begin{center}
\begin{figure}
\begin{centering}
\subfloat[\label{fig:P vs lambda}]{\begin{centering}
\includegraphics[scale=0.45]{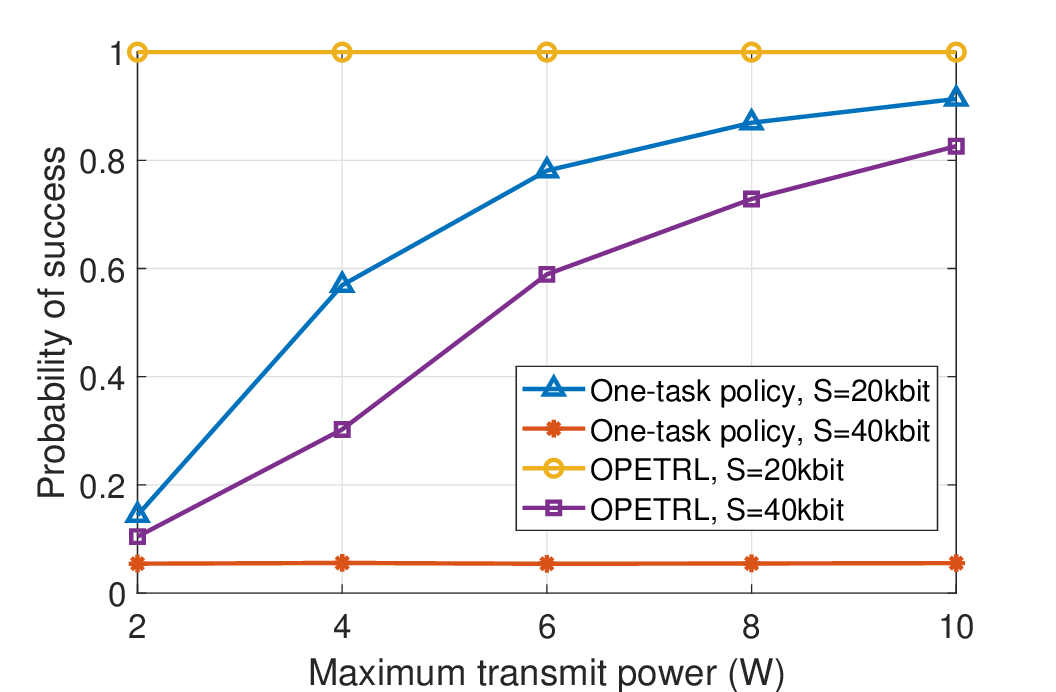}
\par\end{centering}
}
\par\end{centering}
\begin{centering}
\subfloat[\label{fig:E vs lambda}]{\begin{centering}
\includegraphics[scale=0.45]{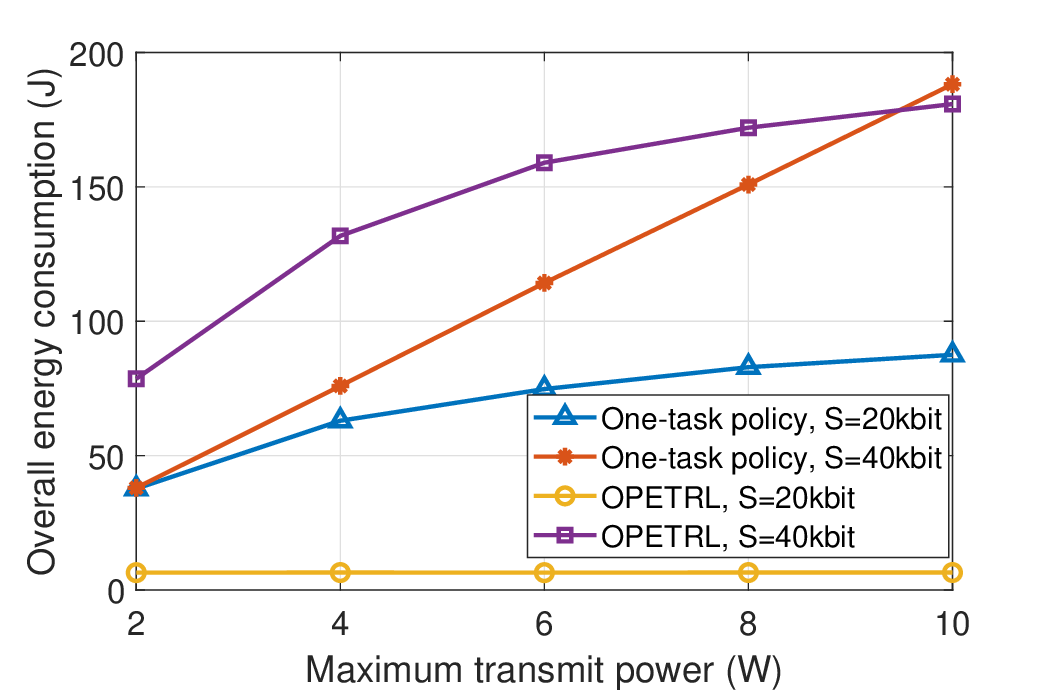}
\par\end{centering}
}
\par\end{centering}
\caption{\label{fig:lambda}The effects of the maximum transmit power $P_{\max}$.
(a) The probability of successful task completion versus $P_{\max}$.
(b) The overall energy consumption versus $P_{\max}$.}
\end{figure}
\par\end{center}

Furthermore, the impact of the maximum transmit power $P_{\max}$
is depicted in Fig. \ref{fig:lambda}. As we can see, when the size
of raw data is small, the transmit power cannot significantly effect
the probability of successful data transmission and the overall energy
consumption under both the proposed OPETRL and the Greedy policy.
This is attributed to the fact that the transmit power in both aforementioned
policies is smaller than its maximum value. However, the UAV transmits
data with maximum power most of the time when employing the one-task
policy, a decrease in $P_{\max}$ leads to a reduction in both the
probability of successful data transmission and the overall energy
consumption.

As shown in Fig. \ref{fig:E vs lambda}, under one-task policy, a
significant number of time slots remain idle while the maximum power
is utilized for transmission. Hence, as the maximum power increases,
there is a proportional increase in total energy consumption without
any corresponding enhancement in the probability of successful task
completion. However, OPETRL maximizes the utilization of time slots
while ensuring that the transmit power in each slot remains below
its maximum threshold. Therefore, an increase in the maximum transmit
power results in a nearly logarithmical rise in total energy consumption.

\section{Conclusion\label{sec:Conclusion }}

In this paper, a TRL-based architecture for multiple-task split inference
in IoT networks is studied. The problem of minimizing overall energy
consumption is initially formulated as a two-timescale optimization
problem. Then, the problem is decomposed into two parts, where TRL
is utilized to determine the transmission mode for each task and OP
is used to optimize the transmit power in each time slot. Furthermore,
we observe that the optimal transmit power decreases monotonically
with increasing transmission time. Thus, the minimal transmit power
is achieved by optimizing the transmission time of each task. Simulation
results show that the proposed OPETRL can achieve a higher probability
of successful data transmission with lower overall energy consumption.

\begin{appendix}

\subsection{Proof of Theorem \ref{thm:optimal solution}\label{Proof:optimalsolution}}

By constructing the Lagrangian function of problem (P2), it can be
obtained 
\begin{align}
\mathcal{L}\left(\mathbf{p}_{t_{i}^{\text{S}}}^{k},\mathbf{\mu},\mathbf{\lambda},\mathbf{\xi},\eta\right)= & \sum_{t=t_{i}^{\text{S}}}^{t_{i}^{\text{S}}+t_{i}^{\text{T,\ensuremath{\max}}}}\left(p_{t}^{k}-\mu_{t}p_{t}^{k}+\lambda_{t}\left(p_{t}^{k}-p_{\max}\right)\right)\nonumber \\
 & -\eta\left(\tau\left(\sum_{t=t_{i}^{\text{S}}}^{t_{i}^{\text{S}}+t_{i}^{\text{T,\ensuremath{\max}}}}R_{t}^{k}\right)+D\right),
\end{align}
where $\mu_{t}$ and $\lambda_{t}$ are the Lagrange multipliers associated
with constraint (\ref{eq:P1-C5}), $\eta$ is the Lagrange multipliers
associated with constraint (\ref{eq:single task rate}). Moreover,
we define $\mathbf{\mu}=\left(\mu_{t_{i}^{\text{S}}},...,\mu_{t_{i}^{\text{S}}+t_{i}^{\text{T,\ensuremath{\max}}}}\right)$
and $\mathbf{\lambda}=\left(\lambda_{t_{i}^{\text{S}}},...,\lambda_{t_{i}^{\text{S}}+t_{i}^{\text{T,\ensuremath{\max}}}}\right)$.
Therefore, the partial derivative of $\mathcal{L}\left(\mathbf{p}_{t_{i}^{\text{S}}}^{k},\mathbf{\mu},\mathbf{\lambda},\mathbf{\xi},\eta\right)$
with respect to $p_{t}^{k}$ is given by
\begin{equation}
\frac{\partial\mathcal{L}\left(\mathbf{p}_{t_{i}^{\text{S}}}^{k},\mathbf{\mu},\mathbf{\lambda},\mathbf{\xi},\eta\right)}{\partial p_{t}^{k}}=1-\mu_{t}+\lambda_{t}-\eta\frac{\tau Wh_{t}^{k}}{1+h_{t}^{k}p_{t}^{k}}.\label{eq:L function}
\end{equation}
Then, the KKT conditions are written as follows
\begin{equation}
\frac{\partial\mathcal{L}\left(\mathbf{p}_{t_{i}^{\text{S}}}^{k,*},\mathbf{\mu},\mathbf{\lambda},\mathbf{\xi},\eta\right)}{\partial p_{t}^{k,*}}=0,\forall t\in\mathcal{T}_{t_{i}^{\text{S}}}\label{eq:L function to p}
\end{equation}
\begin{equation}
\mu_{t}p_{t}^{k,*}=0,\lambda_{t}\left(p_{t}^{k,*}-p_{\max}\right)=0,\forall t\in\mathcal{T}_{t_{i}^{\text{S}}}
\end{equation}
\begin{equation}
\eta\left(\tau\left(\sum_{t=t_{i}^{\text{S}}}^{t_{i}^{\text{S}}+t_{i}^{\text{T,\ensuremath{\max}}}}R_{t}^{k}\left(\mathbf{p}_{t_{i}^{\text{S}}}^{k,*}\right)\right)+D\right)=0\label{eq:L function to QoS}
\end{equation}
\begin{equation}
0\leq p_{t}^{k,*}\leq p_{\max},\forall t\in\mathcal{T}_{t_{i}^{\text{S}}}
\end{equation}
\begin{equation}
\tau\sum_{t=t_{i}^{\text{S}}}^{t_{i}^{\text{S}}+t_{i}^{\text{T,\ensuremath{\max}}}}R_{t}^{k}\left(\mathbf{p}_{t_{i}^{\text{S}}}^{k,*}\right)\geq D\label{eq:QoS for optimal solution}
\end{equation}
\begin{equation}
\mu_{t}\geq0,\lambda_{t}\geq0,\forall t\in\mathcal{T}_{t_{i}^{\text{S}}}.
\end{equation}
According to (\ref{eq:L function}) and (\ref{eq:L function to p}),
we have 
\begin{equation}
\eta=\left(1-\mu_{t}+\lambda_{t}\right)\frac{1+h_{t}^{k}p_{t}^{k,*}}{\tau Wh_{t}^{k}}.
\end{equation}
As a result, we consider the following three cases to analyze the
optimal solution
\begin{itemize}
\item $\eta>\frac{1+h_{t}^{k}p_{t}^{k,*}}{\tau Wh_{t}^{k}}$: It can be
obtained that $1-\mu_{t}+\lambda_{t}>1$, i.e., $0\leq\mu_{t}<\lambda_{t}$.
Thus, we have $p_{t}^{k,*}=p_{\max},\forall t\in\mathcal{T}_{t_{i}^{\text{S}}}$.
\item $\eta<\frac{1+h_{t}^{k}p_{t}^{k,*}}{\tau Wh_{t}^{k}}$: It can be
obtained that $1-\mu_{t}+\lambda_{t}<1$, i.e., $0\leq\lambda_{t}<\mu_{t}$.
Thus, we have $p_{t}^{k,*}=0,\forall t\in\mathcal{T}_{t_{i}^{\text{S}}}$.
\item $\eta=\frac{1+h_{t}^{k}p_{t}^{k,*}}{\tau Wh_{t}^{k}}$: It can be
obtained that $1-\mu_{t}+\lambda_{t}<1$, i.e., $\lambda_{t}=\mu_{t}$.
If $p_{t}^{k,*}=0$, we have $0\leq\mu_{t}<\lambda_{t}$, which is
in conflict with $\lambda_{t}=\mu_{t}$; If $p_{t}^{k,*}=p_{\max}$,
we have $0\leq\lambda_{t}<\mu_{t}$, which is also contradictory.
Hence, it can be obtained $0<p_{t}^{k,*}<1,\forall t\in\mathcal{T}_{t_{i}^{\text{S}}}$.
\end{itemize}
For the second case, if $p_{t}^{k,*}=0,\forall t\in\mathcal{T}_{t_{i}^{\text{S}}}$,
we have $\tau\sum_{t=t_{i}^{\text{S}}}^{t_{i}^{\text{S}}+t_{i}^{\text{T}}}R_{t}^{k}\left(\mathbf{p}_{t_{i}^{\text{S}}}^{k}\right)=0$,
which is in conflict with (\ref{eq:QoS for optimal solution}). Therefore,
we have $\eta\geq\frac{1+h_{t}^{k}p_{t}^{k,*}}{\tau Wh_{t}^{k}}$,
i.e., For the third case, we have $\tau\sum_{t=t_{i}^{\text{S}}}^{t_{i}^{\text{S}}+t_{i}^{\text{T,\ensuremath{\max}}}}R_{t}^{k}\left(\mathbf{p}_{t_{i}^{\text{S}}}^{k}\right)=D$
due to $\eta>0$ and (\ref{eq:L function to QoS}). Hence, it can
be obtained 
\begin{equation}
\tau\sum_{t=t_{i}^{\text{S}}}^{t_{i}^{\text{S}}+t_{i}^{\text{T,\ensuremath{\max}}}}W\log\left(\eta\tau Wh_{t}^{k}\right)=D.
\end{equation}
Then, the Lagrange multipliers is given as
\begin{equation}
\eta=e^{\frac{\frac{D}{\tau W}-\sum_{t=t_{i}^{\text{S}}}^{t_{i}^{\text{S}}+t_{i}^{\text{T,\ensuremath{\max}}}}\left(\tau Wh_{t}^{k}\right)}{t_{i}^{\text{T,\ensuremath{\max}}}}}.\label{eq:eta}
\end{equation}
Substituting (\ref{eq:eta}) into $\eta=\frac{1+h_{t}^{k}p_{t}^{k,*}}{\tau Wh_{t}^{k}}$,
we can obtain
\begin{equation}
p_{t}^{k,*}=\tau We^{\frac{\frac{D}{\tau W}-\sum_{t=t_{i}^{\text{S}}}^{t_{i}^{\text{S}}+t_{i}^{\text{T,\ensuremath{\max}}}}\log\left(\tau Wh_{t}^{k}\right)}{t_{i}^{\text{T,\ensuremath{\max}}}}}-\frac{1}{h_{t}^{k}}.
\end{equation}
The theorem is thus proved.

\subsection{Proof of Theorem \ref{thm:decrease with time}\label{Proof:DecreasewithTime}}

Denote $\mathbf{p}_{t_{i}^{\text{S}}}^{k,*}$ as the optimal solution
of problem (P2), while $\mathbf{p}_{t_{i}^{\text{S}}}^{k,\prime}$
is denoted as an another solution. Assume $p_{t}^{k,\prime}=p_{t}^{k,*},\forall t\in\left(t_{i}^{\text{S}},...,t_{i}^{\text{S}}+t_{i}^{\text{T,\ensuremath{\max}}}\right)/\left\{ t_{1},t_{2}\right\} $,
where $0<p_{t_{1}}^{k,*}<p_{\max}$ and $0<p_{t_{2}}^{k,*}<p_{\max}$
with $t_{1},t_{2}\in\left(t_{i}^{\text{S}},...,t_{i}^{\text{S}}+t_{i}^{\text{T,\ensuremath{\max}}}\right)$.
Without loss of generality, let $p_{t_{1}}^{k,\prime}=0$ and $p_{t_{2}}^{k,\prime}=p_{t_{2}}^{k,*}+\triangle p$
with $\triangle p>0$. In other words, the transmission time of solution
$\mathbf{p}_{t_{i}^{\text{S}}}^{k,\prime}$ is shorter than that of
solution $\mathbf{p}_{t_{i}^{\text{S}}}^{k,*}$. As mentioned in Proof
\ref{Proof:optimalsolution}, the optimal solution satisfies $\tau\sum_{t=t_{i}^{\text{S}}}^{t_{i}^{\text{S}}+t_{i}^{\text{T,\ensuremath{\max}}}}R_{t}^{k}\left(\mathbf{p}_{t_{i}^{\text{S}}}^{k,*}\right)=D$.
Therefore, we can obtain 
\begin{align}
 & \log\left(1+\left(p_{t_{2}}^{k,*}+\triangle p\right)h_{t_{2}}^{k}\right)-\log\left(1+p_{t_{2}}^{k,*}h_{t_{2}}^{k}\right)\nonumber \\
\geq & \log\left(1+p_{t_{1}}^{k,*}h_{t_{1}}^{k}\right).\label{eq:D of rate}
\end{align}
Moreover, due to $0<p_{t_{2}}^{k,*}<p_{\max}$, we have $\eta=\frac{1+h_{t_{2}}^{k}p_{t_{2}}^{k,*}}{\tau Wh_{t_{2}}^{k}}$
according to the third case in Proof \ref{Proof:optimalsolution}.
Hence, (\ref{eq:D of rate}) can be written as 
\begin{align}
 & \log\left(\eta\tau Wh_{t_{2}}^{k}+\triangle ph_{t_{2}}^{k}\right)-\log\left(\eta\tau Wh_{t_{2}}^{k}\right)\nonumber \\
\geq & \log\left(1+p_{t_{1}}^{k,*}h_{t_{1}}^{k}\right).\label{eq:D of rate-1}
\end{align}
We can obtain
\[
\log\left(1+\frac{\triangle p}{\eta\tau W}\right)\geq\log\left(1+p_{t_{1}}^{k,*}h_{t_{1}}^{k}\right),
\]
which can be rewritten as
\[
\triangle p\geq p_{t_{1}}^{k,*}h_{t_{1}}^{k}\eta\tau W.
\]
Similarly, $p_{t_{1}}^{k,*}$ and $h_{t_{1}}^{k}$ satisfy $\eta=\frac{1+h_{t_{1}}^{k}p_{t_{1}}^{k,*}}{\tau Wh_{t_{1}}^{k}}$.
Thus, we have $\triangle p\geq\left(1+h_{t_{1}}^{k}p_{t_{1}}^{k,*}\right)p_{t_{1}}^{k,*}\geq p_{t_{1}}^{k,*}$,
i.e., $\sum_{t=t_{i}^{\text{S}}}^{t_{i}^{\text{S}}+t_{i}^{\text{T,\ensuremath{\max}}}}p_{t_{1}}^{k,*}\leq\sum_{t=t_{i}^{\text{S}}}^{t_{i}^{\text{S}}+t_{i}^{\text{T,\ensuremath{\max}}}}p_{t_{1}}^{k,\prime}$.
The theorem is thus proved.

\subsection{Proof of Theorem \ref{thm:optimal time}\label{Proof:OptimalTime}}

The objective function of problem (P3) can be written as
\begin{align}
 & \underset{i\in\mathcal{I}_{t^{\prime}}}{\sum}E_{i}\left(t_{i}^{\text{T,\ensuremath{\max}}}\right)\nonumber \\
= & \underset{i\in\mathcal{I}_{t^{\prime}}}{\sum}\sum_{t=t_{i}^{\text{S}}}^{t_{i}^{\text{S}}+t_{i}^{\text{T,\ensuremath{\max}}}}\left(\tau We^{\frac{\frac{D_{i}}{\tau W}-\sum_{t=t_{i}^{\text{S}}}^{t_{i}^{\text{S}}+t_{i}^{\text{T,\ensuremath{\max}}}}\log\left(\tau Wh_{t}^{k}\right)}{t_{i}^{\text{T,\ensuremath{\max}}}}}-\frac{1}{h_{t}^{k}}\right).\label{eq:E1 in proof3}
\end{align}
In (\ref{eq:E1 in proof3}), $\sum_{i\in\mathcal{I}_{t^{\prime}}}\sum_{t=t_{i}^{\text{S}}}^{t_{i}^{\text{S}}+t_{i}^{\text{T,\ensuremath{\max}}}}\frac{1}{h_{t}^{k}}$
is constant due to (\ref{eq:P3-C2}) and (\ref{eq:P3-C3}). Thus,
our primary objective is to minimize the subtracter on the right-hand
side in (\ref{eq:E1 in proof3}). Denote $N_{I}$ as the number of
tasks in set $\mathcal{I}_{t^{\prime}}$. It can be obtained 
\begin{align}
 & \underset{i\in\mathcal{I}_{t^{\prime}}}{\sum}\sum_{t=t_{i}^{\text{S}}}^{t_{i}^{\text{S}}+t_{i}^{\text{T,\ensuremath{\max}}}}\tau We^{\frac{\frac{D_{i}}{\tau W}-\sum_{t=t_{i}^{\text{S}}}^{t_{i}^{\text{S}}+t_{i}^{\text{T,\ensuremath{\max}}}}\log\left(\tau Wh_{t}^{k}\right)}{t_{i}^{\text{T,\ensuremath{\max}}}}}\nonumber \\
\geq & N_{I}\sqrt{\underset{i\in\mathcal{I}_{t^{\prime}}}{\prod}\left(\sum_{t=t_{i}^{\text{S}}}^{t_{i}^{\text{S}}+t_{i}^{\text{T,\ensuremath{\max}}}}\tau We^{\frac{\frac{D_{i}}{\tau W}-\sum_{t=t_{i}^{\text{S}}}^{t_{i}^{\text{S}}+t_{i}^{\text{T,\ensuremath{\max}}}}\log\left(\tau Wh_{t}^{k}\right)}{t_{i}^{\text{T,\ensuremath{\max}}}}}\right)}\label{eq:E2 in proof3}
\end{align}
The condition for equality of (\ref{eq:E2 in proof3}) is

\begin{align}
 & \sum_{t=t_{n}^{\text{S}}}^{t_{n}^{\text{S}}+t_{n}^{\text{T,\ensuremath{\max}}}}e^{\frac{\frac{D_{n}}{\tau W}-\sum_{t=t_{n}^{\text{S}}}^{t_{n}^{\text{S}}+t_{n}^{\text{T,\ensuremath{\max}}}}\log\left(\tau Wh_{t}^{k}\right)}{t_{n}^{\text{T,\ensuremath{\max}}}}}\nonumber \\
= & \sum_{t=t_{m}^{\text{S}}}^{t_{m}^{\text{S}}+t_{m}^{\text{T,\ensuremath{\max}}}}e^{\frac{\frac{D_{m}}{\tau W}-\sum_{t=t_{m}^{\text{S}}}^{t_{m}^{\text{S}}+t_{m}^{\text{T,\ensuremath{\max}}}}\log\left(\tau Wh_{t}^{k}\right)}{t_{m}^{\text{T,\ensuremath{\max}}}}},\forall n,m\in\mathcal{I}_{t^{\prime}},\label{eq:E3 in proof3}
\end{align}
which can be further written as 
\begin{align}
 & t_{n}^{\text{T,\ensuremath{\max}}}e^{\frac{\frac{D_{n}}{\tau W}-\sum_{t=t_{n}^{\text{S}}}^{t_{n}^{\text{S}}+t_{n}^{\text{T,\ensuremath{\max}}}}\log\left(\tau Wh_{t}^{k}\right)}{t_{n}^{\text{T,\ensuremath{\max}}}}}\nonumber \\
= & t_{m}^{\text{T,\ensuremath{\max}}}e^{\frac{\frac{D_{m}}{\tau W}-\sum_{t=t_{m}^{\text{S}}}^{t_{m}^{\text{S}}+t_{m}^{\text{T,\ensuremath{\max}}}}\log\left(\tau Wh_{t}^{k}\right)}{t_{m}^{\text{T,\ensuremath{\max}}}}},\forall n,m\in\mathcal{I}_{t^{\prime}}.\label{eq:E3 in proof3-1}
\end{align}
Thus, we have
\begin{align}
 & \frac{t_{n}^{\text{T,\ensuremath{\max}}}}{t_{m}^{\text{T,\ensuremath{\max}}}}\nonumber \\
= & \frac{e^{\frac{D_{m}}{\tau Wt_{m}^{\text{T,\ensuremath{\max}}}}}e^{\frac{1}{t_{n}^{\text{T,\ensuremath{\max}}}}\sum_{t=t_{n}^{\text{S}}}^{t_{n}^{\text{S}}+t_{n}^{\text{T,\ensuremath{\max}}}}\log\left(\tau Wh_{t}^{k}\right)}}{e^{\frac{D_{n}}{\tau Wt_{n}^{\text{T,\ensuremath{\max}}}}}e^{\frac{1}{t_{m}^{\text{T,\ensuremath{\max}}}}\sum_{t=t_{m}^{\text{S}}}^{t_{m}^{\text{S}}+t_{m}^{\text{T,\ensuremath{\max}}}}\log\left(\tau Wh_{t}^{k}\right)}},\forall n,m\in\mathcal{I}_{t^{\prime}}.\label{eq:E4 in proof3}
\end{align}

\end{appendix}

\bibliographystyle{IEEEtran}
\bibliography{TRL}

\end{document}